\newcommand\blfootnote[1]{%
  \begingroup
  \renewcommand\thefootnote{}\footnote{#1}%
  \addtocounter{footnote}{-1}%
  \endgroup
}
\renewcommand{\thefootnote}{\fnsymbol{footnote}}
\DeclareMathOperator*{\argmin}{arg\,min}
\DeclareMathOperator*{\argmax}{arg\,max}
\newcommand{\vect}[1]{\boldsymbol{#1}}
\newcommand{\be}{\begin{equation}}
\newcommand{\ee}{\end{equation}}
\newcommand{\C}{\mathcal{C}}
\newcommand{\N}{\mathcal{N}}
\newcommand{\W}{\mathcal{W}}
\newcommand{\w}{\vect{w}}
\newcommand{\e}{\vect{e}}
\newcommand{\wuser}{\vect{{w}}^*}
\newcommand{\f}{\vect{f}}
\newcommand{\rAbs}{r}
\newcommand{\Uni}{\mathtt{Uniform}}
\newcommand{\Greedy}{\mathtt{MRPS}}
\newcommand{\Random}{\mathtt{Random}}
\newcommand{\Regret}{\mathtt{Regret}}
\newcommand{\blue}[1]{\textcolor{black}{#1}}
\newcommand{\grey}[1]{\textcolor{gray}{#1}} 
\theoremstyle{definition}
\newtheorem{assumption}{Assumption}
\newtheorem{observation}{Observation}
\newtheorem*{remark*}{Remark}
\begin{document}
	\mainmatter  
	%

	

\title{Error-Bounded Approximation of Pareto Fronts in Robot Planning Problems}

\titlerunning{Error-Bounded Approximation of Pareto Fronts in Robot Planning Problems} 
	\author{Alexander Botros\footnotemark[1]
	\and Armin Sadeghi\footnotemark[1]
	\and Nils Wilde\footnotemark[2]
	\and Javier Alonso-Mora\footnotemark[2]
	\and
	Stephen L. Smith\footnotemark[1]}

	\authorrunning{A.~Botros, A.~Sadeghi, N.~Wilde, J.~Alonso-Mora and S.~L.~Smith} 
	%
	\tocauthor{Alexander Botros, Armin Sadeghi, Nils Wilde, Javier Alonso-Mora, Stephen L. Smith^1}
	%
	
	\institute{\footnotemark[1] Department of Electrical and Computer Engineering\\
University of Waterloo, Waterloo, ON, Canada\\
\texttt{\{alexander.botros, a6sadegh, stephen.smith\}@uwaterloo.ca},\\
\footnotemark[2] Department for Cognitive Robotics, 3ME\\
Delft University of Technology, Delft, Netherlands\\
\texttt{\{n.wilde, j.alonsomora\}@tudelft.nl},
}
	\maketitle 
	
	\blfootnote{\noindent\textbf{Acknowledgements:} This research is partially
    supported by the Natural Sciences and Engineering Research Council
    of Canada (NSERC), as well as  by the European Union's Horizon 2020 research and innovation program under Grant 101017008.}
   \blfootnote{Alexander Botros, Armin Sadeghi and Nils Wilde contributed equally.}
    
\begin{abstract}
    Many problems in robotics seek to simultaneously optimize several competing objectives under constraints. A conventional approach to solving such multi-objective optimization problems is to create a single cost function comprised of the weighted sum of the individual objectives. 
    Solutions to this scalarized optimization problem are Pareto optimal solutions to the original multi-objective problem. 
    However, finding an accurate representation of a Pareto front remains an important challenge. Using uniformly spaced weight vectors is often inefficient and does not provide error bounds. Thus, we address the problem of computing a finite set of weight vectors such that for any other weight vector, there exists an element in the set whose error compared to optimal is minimized. To this end, we prove fundamental properties of the optimal cost as a function of the weight vector, including its continuity and concavity. Using these, we propose an algorithm that greedily adds the weight vector least-represented by the current set, and provide bounds on the error. Finally, we illustrate that the proposed approach significantly outperforms uniformly distributed weights for different robot planning problems with varying numbers of objective functions.

	\keywords Multi-objective optimization, Planning, Human-robot interaction
\end{abstract}
	%

\section{Introduction}
Robot planning problems often face the challenge of simultaneously optimizing multiple objectives. Finding the appropriate trade-off between objectives remains a major challenge when deploying intelligent autonomous systems. For instance, autonomous cars optimize trajectories for passenger comfort and efficiency \cite{ziegler2014trajectory, botros2021tunable}, manipulators working in cooperative tasks consider human ergonomics as well as risk of collision or other damage in handover tasks \cite{bajcsy2017learning},
mobile robots performing transportation in industrial settings trade-off task efficiency and compliance with user specific norms \cite{wilde2020improving},
and autonomous mobility-on-demand systems seek to maximize service quality while minimizing operation cost \cite{cap2018multi}. 

\begin{figure*}[t!]
    \centering
    \begin{subfigure}[t]{0.4\textwidth}
        \centering
        \includegraphics[width=0.99\textwidth]{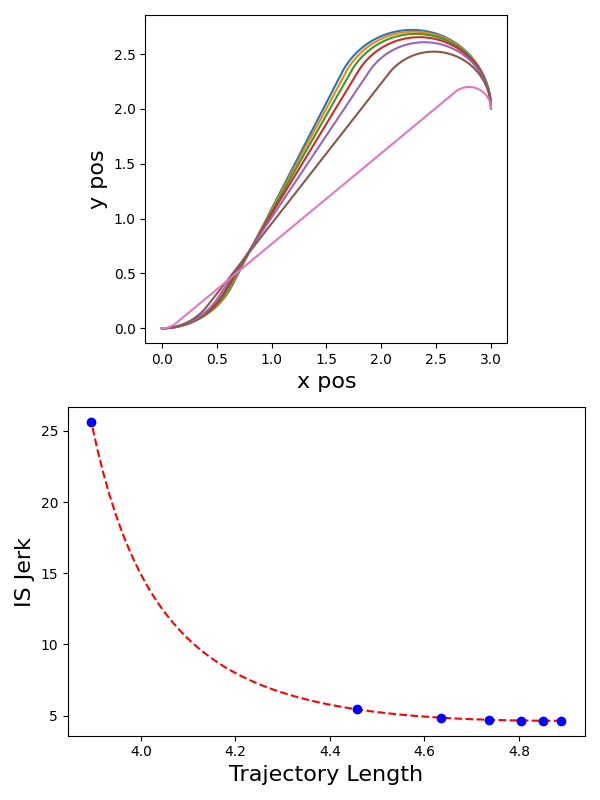}
        \caption{Uniform sampling.}
        \label{fig:Dubins_traject_Pareto_example_a}
    \end{subfigure}%
    ~ 
    \begin{subfigure}[t]{0.4\textwidth}
        \centering
        \includegraphics[width=0.99\textwidth]{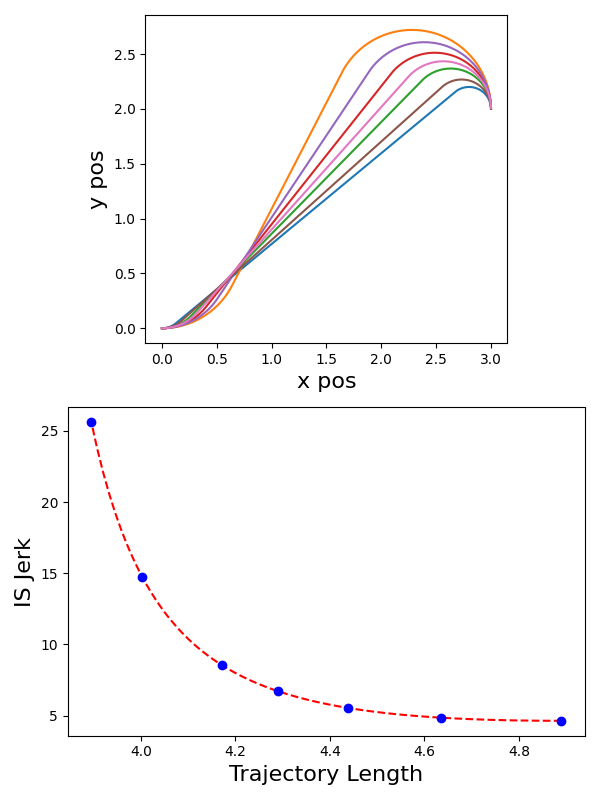}
        \caption{Proposed min-regret sampling.\bigskip}
        \label{fig:Dubins_traject_Pareto_example_b}
    \end{subfigure}
    \caption{\footnotesize \textit{Top:} Different optimal trade-offs between trajectory length and discomfort, sampled uniformly and by the proposed method. \textit{Bottom:} Approximated Pareto front (blue) and ground truth Pareto front (red).\normalsize}
    \label{fig:Dubins_traject_Pareto_examples}
\end{figure*}

A common technique in multi-objective optimization is to create a single cost function defined by the weighted sum of the individual objectives \cite{marler2010weighted}. This approach, called \emph{linear scalarization}, leads to solutions that are Pareto optimal for the multi-objective problem \cite{hwang2012multiple}. That is, the solution to the scalarized single-objective problem cannot be changed to improve the value of one of its objectives without degrading the value of another. This work is motivated by two classes of problems in robotics: 1) obtaining near optimal solutions to a linearly scalarized multi-objective optimization problem (LSMOP) for any given weight vectors, and 2) learning the preferred solution behavior of a human user. Applications of the first class include \cite{cap2018multi} in which an adjustable trade-off between the service quality and operating costs for autonomous mobility-on-demand systems are optimized. Conversely, the second class of problems seeks to compute a weight vector representing the relative importance of each objective to a user given some knowledge of that users' preferred solutions~\cite{sadigh2017active, zhang2019learning, holladay2016active, wilde2020improving, wilde2020active, biyik2019asking}. 

In either class, it is often beneficial to pre-compute solutions to the LSMOP for a set of weight vectors. If the LSMOP is computationally intensive to solve, requiring online solutions may be impractical. This motivates the problem of finding a set of weights and their corresponding optimal solutions such that for any possible weight vector, there exists an element of the set whose solution is close to optimal. A naive approach would involve densely sampling the set of all possible weight vectors. However, the sensitivity of some objectives to changes in solution may result in a skewed sampling of Pareto optimal solutions. This is illustrated in Figure \ref{fig:Dubins_traject_Pareto_examples} where we seek to compute a Dubins trajectory between fixed start and goal configurations that minimizes a trade-off between trajectory length and discomfort (measured as the integral of the squared jerk over the trajectory \cite{levinson2011towards}). In the left images, a set of weights vectors is selected uniformly and the resulting solution trajectories (top) and Pareto front (bottom) is shown. We observe that uniformly sampling over the set of possible weights generate a set of similar trajectories. In this paper, we propose a technique to construct a set of weight vectors such that their corresponding optimal solutions provide homogeneous coverage of the Pareto front (see Figure~\ref{fig:Dubins_traject_Pareto_example_b}).

\paragraph{Contributions: }The contributions of this work are fourfold: First, we propose an algorithm for homogeneous sampling of the Pareto front. Second, we prove fundamental properties of the optimal cost for any weight such as concavity and continuity.
\blue{Third, we show that using the best available solution from the computed set to approximate the optimal solution to LSMOP for any arbitrary weight has has a bounded error.}
Finally, we showcase the advantages of our sampling algorithm in different robotics applications, namely trajectory planning and reward learning.

\subsection{Related Work}
\paragraph{Linear Scalarization in Robotics:}
The simplicity of linear scalarization has made it one of the most widely used tools in robotic multi-objective optimization \cite{schutze2020Pareto}. Though the technique is not able to capture all Pareto-optimal solutions for non-convex fronts, it does guarantee that all LSMOP solutions are Pareto-optimal. In \cite{dolgov2010path}, a trajectory smoothing algorithm is proposed based on the weighted sum of the competing objectives: trajectory length, smoothness, and obstacle distances. The authors of \cite{zeng2021simultaneous} minimize the weighted trade-off between mission completion time and communication outage duration in the navigation of cellular-connected UAVs, while in \cite{9197201}, linear scalarization is used to optimize robotic limitations and observation rewards for use in autonomous human activity tracking. 

In human-robot interation (HRI), weight vectors are used to represent the relative importance of objectives -- often referred to as features -- to a user \cite{cakmak2011human, sadigh2017active, biyik2019asking, wilde2020active}. In \textit{reward learning} it is the objective to learn a user's weight vector using interactions such as demonstrations, corrections, or choice feedback.
In order to expedite the learning process, feasible solutions for the multi-objective optimization problem are often pre-computed and shown to the user who then provides feedback. In \cite{sadigh2017active, basu2018learning, biyik2019asking}, each pre-computed solution is generated with random action sequences. Thus, the solutions used in the learning process are usually not optimal for any weight. In \cite{wilde2020active,wilde2021learningScale}, the authors pre-compute Pareto-optimal solutions enabling an active learning method based on regret leading to significant improvement over randomly generated solutions. However, the work in \cite{wilde2020active, wilde2021learningScale} rely on uniformly sampled weight vectors. Though this approach asymptotically covers the set of all LSMOP-optimal solutions, it can be inefficient as different weight vectors can have very similar, or even identical solutions. The authors of \cite{bobu2020less} observed that the presence of similar solution strongly influences the Boltzmann decision model which is commonly used in HRI. They propose a decision model where a similarity metric corrects the bias induced by a high number of similar trajectories.
Similarly, in our work we are interested in finding solutions with dissimilar features. 
While \cite{bobu2020less} handles the over-representation of similar solutions in the ground set with their proposed decision model, we instead address the problem at an earlier stage: our algorithm can be used to generate a ground set where similarities are minimized.

\paragraph{Uniform Sampling of Pareto-Optimal Solutions:}
Uniform sampling of a Pareto front is a persistent problem in multi-objective optimization. Similar to our work, the authors of \cite{pereyra2013equispaced} offer a technique of Pareto-uniform sampling based on equispacing constraints. However, this work only considers the case of two competing objectives. Further, they accomplish their goal by solving a significantly harder problem than the original LSMOP. Also similar, the work in \cite{xu2021multi} proposes a set of weight vectors that approximately uniformly cover a Pareto front specifically for use in the design of robots. The authors design the set that minimizes the total squared error between the value of the objectives in the set and heuristic objectives. It therefore relies on the approximate optimality of these objectives. The work in \cite{parisi2017manifold} proposes a method to cover the set of Pareto-optimal solutions specifically for use in reinforcement learning applications. There, the authors seek to compute policies that maximize expected returns by computing, storing, and updating a set of samples. In \cite{schutze2020Pareto} the authors provide a means of exploring a (possibly non-convex) Pareto front in order to obtain a solution that is near-optimal for a user. That work starts with an initial guess solution and then moves in a direction according to the preferences of a user. While the convexity of the Pareto front (a requirement for linear scalarization to obtain all Pareto-optimal solutions) is not assumed, their technique does require solving the LSMOP online as the Pareto front is explored. Moreover, the requirement of a user-preferred direction is not assumed in our work. In \cite{lee2018sampling} a Pareto front approximation is proposed using Markov chain random walks. 
Their goal is to \emph{uniformly} place samples on the Pareto front, while our goal is to \emph{minimize error} in the space of Pareto-optimal costs.

\section{Problem Statement}
For $n\in\mathbb{N}$, a general multi-objective optimization problem (MOP) is of the form
\begin{equation*}
    \begin{split}
        \min_{s\in\mathcal{S}} \bigg(f_1(s), f_2(s),\dots,f_n(s)\bigg).
    \end{split}
\end{equation*}
Here, the set of feasible solutions given constraints is denoted $\mathcal{S}$, and it is desired to simultaneously minimize $n$ objectives $f_i(s), i=1,\dots, n$.  The linear scalarization of the MOP above would involve the creation of a single cost function by introducing a vector of weights $\w=[w_1,w_2,\dots,w_n]\in\mathbb{R}_{\geq 0}^n$. Let $c(s, \w)$ denote the cost of the solution $s$ evaluated by the weights $\w$, i.e., 
$$
    c(s, \w) = \sum_{i = 1}^{n}w_i \cdot f_i(s) = \w \cdot \f(s),
$$
where $\f(s)=[f_1(s),\dots,f_n(s)], \ \forall s\in\mathcal{S}$. The resulting linearly scalarized multi-objective optimization problem (LSMOP) is to solve
\be
\label{eq:self_cost}
\begin{split}
u(\w)&=\min_{s\in\mathcal{S}} c(s, \w).
\end{split}
\ee
 For any weight $\w\in\mathbb{R}_{\geq 0}^n$, solution $s\in\mathcal{S}$, and $\lambda\in\mathbb{R}_{>0}$, it holds that $c(s, \lambda \w)=\lambda c(s, \w)$ implying that a minimizer of $c(s, \w)$ also minimizes $c(s, \lambda\w)$. Further, if $\w=[0, 0, \dots, 0]$, then $u(\w)$ is trivially 0. Thus, given $\w=[w_1,\dots,w_n]$ where not all elements are identically 0, and letting $\lambda = \left(\sum_{i=1}^nw_i\right)^{-1}$, we can obtain all non-trivial optimal solutions $u(\w)$ for all $\w\in\mathbb{R}^n_{\geq0}$ using weights $\w\in \W$ where
\be
\label{eq:Wdef}
\W=\Big\{\w\in\mathbb{R}^n_{\geq 0}, \sum_{i=1}^nw_i=1\Big\}.
\ee
We refer to the set $\mathcal{W}$ as the \emph{weight space}, and we use the notation
$$
s^*(\w)=\argmin_{s\in\mathcal{S}} \w\cdot \f(s), \ \forall \w\in \W,
$$
implying that $u(\w)=c(s^*(\w), \w)$ by \eqref{eq:self_cost}. We make the following assumptions:
\begin{assumption}[Exact Solution]
\label{ass:cstsexst}
An exact solver exists for the optimization problem~\eqref{eq:self_cost}.
\end{assumption}
\begin{assumption}[Bounded Objectives]
\label{ass:bndcost}
For any weight $\w\in\mathbb{R}^n_{\geq 0}$, and any solution $s^*(\w)$, the objectives $\f(s^*(\w))$ are bounded.
\end{assumption}
In this work, we propose a method to compute a finite set of weights $\Omega\subset \W$ that will, for any $\w^*\in\W$, allow us to approximate $u(\w^*)$ with $u(\w')$ for an appropriately chosen $\w'\in\Omega$. To evaluate the quality of a candidate set $\Omega$, we use the notion of regret from \cite{wilde2020active, wilde2021learning}, defined formally here:
\begin{definition}[Regret]
Given two weights $\w',\w^*\in\W$, the regret
of $\w'$ under $\w^*$ is defined as
\be
\label{eq:AbsRegret}
\rAbs(\w' |\w^*) = \w^*\cdot \f\left(s^*(\w')\right) - u(\w^*).
\ee
\end{definition}
Intuitively, $r(\w'|\w^*)$ represents the \blue{\emph{error}} in cost incurred by using an optimal solution given weight $\w'$ (given by $s^*(\w')$) to approximate a solution given weight $\w^*$. We now formally state the main problem addressed in this work.

\begin{problem}[Min-Max Regret Sampling]
\label{prob:minmax}
For the LSMOP~\eqref{eq:self_cost} and some integer $K>0$, find a set of sampled weights $\Omega$ that solves
\be
\label{eq:objective}
\begin{aligned}
\min_{\Omega}& \max_{\wuser\in \W} \min_{\w'\in \Omega}
r(\w' |\wuser)\\
s.t.\;& |\Omega|\leq K.
\end{aligned}
\ee
\end{problem}
Given a weight $\w^*\in\W$ and a set $\Omega\subset\W$, the first minimization in \eqref{eq:objective},  $\min_{\w'\in\Omega} r(\w'|\w^*)$ represents the sub-optimality of approximating a solution $s^*(\w^*)$ with a solution $s^*(\w')$ where $\w'$ is the weight in $\Omega$ that minimizes this sub-optimality -- i.e., $\w'$ is a best representative of $\w^*$ in $\Omega$. The maximization $\max_{\w^*\in\W}\min_{\w'\in\Omega}r(\w'|\w^*)$, represents the regret of the worst represented weight $\w^*\in\W$ by elements in $\Omega$. We refer to the solution of this maximization as the \emph{maximum regret given} $\Omega$. In total, \eqref{eq:objective} seeks a set $\Omega$ such that the regret of the worst represented element in $\W$ is minimized. 

In this paper, we offer an approximate solution to the optimization in \eqref{eq:objective} by way of an algorithm that computes a feasible solution $\Omega$ such that the maximum regret given $\Omega$ is bounded. In the next section, we provide the theoretical groundwork that makes this solution possible.

\section{Problem Analysis}
\label{sec:analysis}
We begin with a structural analysis of the cost function $u(\w)$ to derive an efficient algorithm for solving Problem \ref{prob:minmax}.
First, we make two critical observations.
\begin{observation}
\label{obs:self_opt_traj}
Given any two weights $\w^*, \w' \in \W$, we have
\begin{equation}
\label{eq:optisopt}
u(\w^*)\leq \w^*\cdot\f(s^*(\w')) , 
\end{equation}
 That is an optimal solution given weights $\w^*$ will incur no higher cost than a solution that is optimal for some different weight vector $\w'$. Here, $s^*(\w')$ is optimal given weights $\w'$ but not necessarily optimal given weights $\w^*$. By \eqref{eq:AbsRegret}, the inequality in \eqref{eq:optisopt} implies that $\rAbs(\w'|\w^*)\geq 0$.
\end{observation}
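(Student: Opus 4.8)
The plan is to derive the inequality \eqref{eq:optisopt} directly from the fact that $u(\w^*)$ is defined as a pointwise minimum of the linear objective over the entire feasible set $\mathcal{S}$. First I would note that, by Assumption~\ref{ass:cstsexst}, the minimizer $s^*(\w')=\argmin_{s\in\mathcal{S}}\w'\cdot\f(s)$ exists, and in particular $s^*(\w')\in\mathcal{S}$. Thus $s^*(\w')$ is a feasible --- though not necessarily optimal --- candidate for the optimization problem \eqref{eq:self_cost} that defines $u(\w^*)$.

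Second, since $u(\w^*)=\min_{s\in\mathcal{S}} \w^*\cdot\f(s)$ is a minimum taken over all of $\mathcal{S}$, evaluating the objective $\w^*\cdot\f(\cdot)$ at the particular feasible point $s^*(\w')$ can only yield a value at least as large as that minimum; that is, $u(\w^*)\leq \w^*\cdot\f(s^*(\w'))$, which is exactly \eqref{eq:optisopt}. Substituting this inequality into the definition of regret in \eqref{eq:AbsRegret} immediately gives $\rAbs(\w'|\w^*)=\w^*\cdot\f(s^*(\w'))-u(\w^*)\geq 0$, establishing the final claim of the observation.

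There is no substantive obstacle here: the statement is simply the elementary fact that a minimum over a set is a lower bound for the objective value at any single element of that set. The only point requiring care is that $s^*(\w')$ is genuinely an element of $\mathcal{S}$, which is guaranteed by Assumption~\ref{ass:cstsexst}; and if the $\argmin$ defining $s^*(\w')$ is not unique, the same argument applies verbatim to any fixed choice of minimizer, since the inequality uses only its feasibility and not its optimality for $\w^*$.
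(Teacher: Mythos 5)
Your proof is correct and matches the paper's own (implicit) justification: the paper treats this as an immediate consequence of $u(\w^*)$ being a minimum over all of $\mathcal{S}$ while $s^*(\w')$ is merely one feasible element of $\mathcal{S}$, which is exactly your argument. Your added remarks on existence of the minimizer via Assumption~\ref{ass:cstsexst} and on non-uniqueness of the $\argmin$ are careful but not substantively different from what the paper intends.
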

\blue{
\begin{observation}[Optimal Cost Concavity]
\label{obs:concavity}
The optimal cost function $u(\w)$ is a concave function of $\w$.  Indeed, for each $s\in\mathcal{S}$, the cost $c(s, \w)=\w\cdot \f(s)$ is an affine function of $\w$ (and is therefore concave). Therefore, $u(\w)=\min_{s\in \mathcal{S}}c(s, \w)$ is concave \cite{boyd2004convex}[Section 3.2.3].
\end{observation}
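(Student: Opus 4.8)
The plan is to obtain concavity directly from the representation $u(\w)=\min_{s\in\mathcal{S}} c(s,\w)$ in \eqref{eq:self_cost}, invoking the standard fact that a pointwise infimum of affine functions is concave. As a preliminary I would note that the domain under consideration (either the simplex $\W$ of \eqref{eq:Wdef} or the cone $\mathbb{R}^n_{\geq 0}$) is convex, so convex combinations of weights stay in the domain and the concavity inequality is well posed; Assumption~\ref{ass:bndcost} guarantees that $u(\w)$ is finite for every such $\w$, so the inequality is between finite quantities.

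The core argument is a two-line chain. Fix weights $\w_1,\w_2$ in the domain and $\theta\in[0,1]$, and set $\w_\theta=\theta\w_1+(1-\theta)\w_2$. For every fixed $s\in\mathcal{S}$ the map $\w\mapsto c(s,\w)=\w\cdot\f(s)$ is linear, hence affine, so
\be
\label{eq:affine_split}
c(s,\w_\theta)=\theta\, c(s,\w_1)+(1-\theta)\, c(s,\w_2)\;\geq\;\theta\, u(\w_1)+(1-\theta)\, u(\w_2),
\ee
where the inequality is just $c(s,\w_i)\geq u(\w_i)=\min_{s'\in\mathcal{S}}c(s',\w_i)$ applied to each of the two terms. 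Since \eqref{eq:affine_split} holds for every $s\in\mathcal{S}$, taking the minimum of the left-hand side over $s$ (attained at $s^*(\w_\theta)$ by Assumption~\ref{ass:cstsexst}) yields $u(\w_\theta)\geq\theta\, u(\w_1)+(1-\theta)\, u(\w_2)$. As $\w_1,\w_2$ and $\theta$ were arbitrary, this is precisely concavity of $u$ on its domain.

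I do not expect any genuine obstacle here: this is the textbook statement that the lower envelope of a family of affine functions is concave, and the only points needing care are bookkeeping ones --- convexity of the weight domain, and using Assumptions~\ref{ass:cstsexst}--\ref{ass:bndcost} to ensure the relevant minima are attained and finite. It is worth noting that attainment is not even essential: replacing $\min$ by $\inf$ throughout, the same inequality \eqref{eq:affine_split} followed by an infimum over $s$ gives the result, so the argument degrades gracefully if Assumption~\ref{ass:cstsexst} is weakened. One could alternatively cite \cite{boyd2004convex}[Section 3.2.3] verbatim, but spelling out the one-line computation makes the dependence on the affine structure of $c(\cdot,\w)$ explicit and keeps the observation self-contained.
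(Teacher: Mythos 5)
Your proof is correct and follows essentially the same route as the paper: the paper's observation simply notes that each $c(s,\w)=\w\cdot\f(s)$ is affine in $\w$ and cites \cite{boyd2004convex} for the fact that a pointwise minimum of affine (concave) functions is concave, which is exactly the computation you write out explicitly. Spelling out the two-line chain and noting that the argument survives with $\inf$ in place of $\min$ is a harmless elaboration, not a different method.
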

Observation \ref{obs:concavity} motivates the following Corollary:
}
\blue{
\begin{corollary}[Optimal Cost Continuity]
\label{lem:cost_cont}
Under Assumptions \ref{ass:cstsexst}, \ref{ass:bndcost}, the function $u(\w)$ is continuous on the interior of $\W$. Further, $u(\w)$ is continuous on the boundary of $\W$ in the direction of its interior.
\end{corollary}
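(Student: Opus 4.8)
The plan is to derive the whole statement from the concavity already established in Observation~\ref{obs:concavity} together with the two standing assumptions. First I would observe that Assumption~\ref{ass:cstsexst} guarantees the minimum in \eqref{eq:self_cost} is attained, so $s^*(\w)$ exists for every $\w\in\W$, and Assumption~\ref{ass:bndcost} then makes $u(\w)=\w\cdot\f(s^*(\w))$ a finite real number. Hence $u$ is a \emph{finite-valued concave function on the convex set} $\W$. Since $\W$ is a simplex of dimension $n-1$ sitting inside $\R^n$, ``interior of $\W$'' must be read as relative interior; I would state this explicitly so the cited result applies cleanly.

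Interior continuity is then immediate from a standard fact of convex analysis: a finite concave (equivalently, convex) function on a convex set is continuous on its relative interior \cite{boyd2004convex}. This gives the first claim with no further work.

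For the boundary claim I would fix a point $\w_0$ on the boundary of $\W$ and any $\w_1$ in the relative interior, and study the segment $\w_t=(1-t)\w_0+t\w_1$ for $t\in(0,1]$ (which lies in the relative interior for $t>0$), showing $\lim_{t\to0^+}u(\w_t)=u(\w_0)$ by squeezing. Concavity of $u$ yields $u(\w_t)\ge (1-t)\,u(\w_0)+t\,u(\w_1)$, hence $\liminf_{t\to0^+}u(\w_t)\ge u(\w_0)$. In the other direction, evaluating the (generally suboptimal) solution $s^*(\w_0)$ at weight $\w_t$ and using that $c(\cdot,\w)$ is affine in $\w$ gives $u(\w_t)\le c\big(s^*(\w_0),\w_t\big)=(1-t)\,u(\w_0)+t\,\w_1\cdot\f\big(s^*(\w_0)\big)$; since $\f(s^*(\w_0))$ is bounded by Assumption~\ref{ass:bndcost}, the right-hand side tends to $u(\w_0)$, so $\limsup_{t\to0^+}u(\w_t)\le u(\w_0)$. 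Combining the two bounds gives continuity of $u$ at $\w_0$ along every direction pointing into the interior.

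I do not expect a genuine obstacle here: the argument is essentially mechanical, and the only thing to be careful about is that every quantity is finite and every minimizer exists, which is exactly what Assumptions~\ref{ass:cstsexst}--\ref{ass:bndcost} supply. As a side remark worth including, $u$ is a pointwise infimum of the affine maps $\w\mapsto c(s,\w)$ and is therefore automatically upper semicontinuous on all of $\W$; together with concavity this would in fact upgrade the statement to continuity on the whole of $\W$, but the directional boundary version above is all that the subsequent results need.
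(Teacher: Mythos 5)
Your proof is correct, and it reaches the boundary claim by a cleaner, more direct route than the paper. The paper's argument is by contradiction: it posits a jump of fixed size $M$ between a boundary point $\w'$ and a nearby interior point $\w''$, uses Assumption~\ref{ass:bndcost} to make $(\w'-\w'')\cdot\f(s^*(\w''))$ smaller than $\nicefrac{M}{2}$, and concludes that the regret $\rAbs(\w''|\w')$ would be negative, contradicting Observation~\ref{obs:self_opt_traj}. Your squeeze argument uses the same key ingredient for the upper bound --- evaluating the suboptimal solution $s^*(\w_0)$ at $\w_t$ and invoking boundedness of $\f(s^*(\w_0))$ --- but replaces the contradiction with a direct $\limsup$ estimate, and supplies the matching $\liminf$ estimate from concavity rather than from a WLOG on which value is larger. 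The two proofs are therefore close in substance but differ in structure; yours is arguably tighter, since it yields an explicit two-sided bound $\,(1-t)u(\w_0)+t\,u(\w_1)\le u(\w_t)\le(1-t)u(\w_0)+t\,\w_1\cdot\f(s^*(\w_0))$ rather than ruling out a hypothetical jump. Two of your side observations also add genuine value: the explicit remark that ``interior'' must be read as \emph{relative} interior (the simplex $\W$ has empty interior in $\mathbb{R}^n$, so the standard continuity result for concave functions applies only in the relative sense --- a point the paper glosses over), and the note that $u$, as a pointwise infimum of affine functions, is automatically upper semicontinuous on all of $\W$. The only caution is on your final claim that this upgrades the result to full continuity on $\W$: upper semicontinuity plus the segment-wise lower bound gives continuity along lines into the interior, but full (non-directional) continuity at boundary points of a general convex domain requires a further argument; it does go through on a polytope such as $\W$, but since the corollary only asserts the directional version, you do not need it.
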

\begin{proof}
By Observation \ref{obs:concavity}, $u(\w)$ is concave in $\W$. Noting in addition that $\W\subset \mathbb{R}^n$ is convex, it must hold that $u(\w)$ is continuous on the interior of $\W$. This is because concave functions are continuous on the interior of convex sets. Next, consider two weights $\w', \w''\in\W$ one of which lies on the boundary of $\W$ and one in the interior of $\W$. Suppose that $||\w'-\w''||\leq \delta$ for some $\delta>0$ arbitrarily small, and -- without loss of generality -- that $u(\w')>u(\w'')$. Because it is assumed that $u(\w)$ is bounded on $\W$, if it experiences a discontinuity on the line connecting $\w', \w''$, it must hold that $u(\w)$ experiences a jump between $\w', \w''$. That is, there must exist a $M\in\mathbb{R}_{>0}$ independent of $\delta$ such that $u(\w'') + M\leq  u(\w')$. Since $||\w''-\w'||\leq \delta$, and $f(s^*(\w''))$ is bounded by Assumption \ref{ass:bndcost}, there must exist a value of $\delta$ sufficiently small so as to guarantee that $(\w'-\w'')\cdot f(s^*(\w'')) < \nicefrac{M}{2}$. Therefore, by construction,
\begin{equation*}
    \begin{split}
        \w'\cdot f(s^*(\w'')) < \frac{M}{2}+\w''\cdot f(s^*(\w''))=\frac{M}{2}+u(\w'')\\
        \leq \frac{M}{2} + u(\w') - M = u(\w') - \frac{M}{2}<u(\w').
    \end{split}
\end{equation*}
Therefore, $\rAbs(\w''|\w')=\w'\cdot f(s(\w''))-u(\w')<0$ which is a contradiction by Observation \ref{obs:self_opt_traj}.
\end{proof}
}

Critically, the previous results do not require unique solutions $s^*(\w)$ or continuous objectives $\f(s^*(\w))$. Extending these results: 
\begin{theorem}[Convexity of Regret]
\label{thm:convex_regret}
For a fixed weight $\w'\in\W$, the regret $\rAbs(\w'|\w)$ is a convex function of $\w$.
\end{theorem}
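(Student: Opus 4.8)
The plan is to write the regret $\rAbs(\w' \mid \w)$ explicitly in terms of the two quantities appearing in its definition~\eqref{eq:AbsRegret}, namely $\rAbs(\w' \mid \w) = \w\cdot\f(s^*(\w')) - u(\w)$, and then argue convexity of each term in $\w$ separately. The first term, $\w \mapsto \w\cdot\f(s^*(\w'))$, is an affine function of $\w$ because $\w'$ is fixed, so $\f(s^*(\w'))$ is a fixed vector; affine functions are both convex and concave. The second term is $-u(\w)$, and by Observation~\ref{obs:concavity} the optimal cost $u(\w)$ is concave in $\w$, so $-u(\w)$ is convex in $\w$. The regret is therefore a sum of an affine function and a convex function, hence convex on $\W$.

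The steps in order: (i) fix $\w' \in \W$ and recall from~\eqref{eq:AbsRegret} that $\rAbs(\w'\mid\w) = \w\cdot\f(s^*(\w')) - u(\w)$, viewing the right-hand side as a function of the free variable $\w$; (ii) observe that $\f(s^*(\w'))$ does not depend on $\w$, so $\w\cdot\f(s^*(\w'))$ is linear (hence convex) in $\w$; (iii) invoke Observation~\ref{obs:concavity} to conclude $u(\w)$ is concave, so $-u(\w)$ is convex; (iv) conclude that $\rAbs(\w'\mid\w)$, being the sum of two convex functions over the convex set $\W$, is convex in $\w$. One should also note that $\W$ itself is convex (it is a simplex, as already used in Corollary~\ref{lem:cost_cont}), so the statement is meaningful.

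There is essentially no hard obstacle here: the result is an immediate corollary of Observation~\ref{obs:concavity} together with the bilinear structure of $c(s,\w) = \w\cdot\f(s)$, which makes the cost affine in $\w$ for fixed $s$. The only point requiring a word of care is keeping straight which argument is fixed and which is free — the regret is convex in its \emph{second} argument $\w$ with the \emph{first} argument $\w'$ held fixed, and it is the affineness in $\w$ (not in $s$) that matters, but since $c$ is bilinear both roles behave affinely. No continuity, uniqueness of minimizers, or regularity of $s^*(\cdot)$ is needed, mirroring the remark preceding the theorem that the earlier results do not require such assumptions.
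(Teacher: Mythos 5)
Your proposal is correct and follows essentially the same argument as the paper: decompose the regret as the affine term $\w\cdot\f(s^*(\w'))$ minus the concave function $u(\w)$ (Observation~\ref{obs:concavity}), and conclude convexity. The additional remarks about the convexity of $\W$ and which argument is held fixed are fine but not substantively different from the paper's proof.
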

\begin{proof}
By the definition of regret in \eqref{eq:AbsRegret},
$
\rAbs(\w'|\w)=\w\cdot\f(s^*(\w'))-u(\w).
$
where $\w\cdot\f(s^*(\w'))$ is linear in $\w$ and $u(\w)$ is concave by Observation \ref{obs:concavity}. Thus, $\rAbs(\w'|\w)$ is the difference of a linear and concave function of $\w$ which is convex.
\end{proof}
Because $u(\w)$ is continuous and concave, \blue{it must hold that the function lies below any sub-gradient.} 
This motivates the following Corollary \blue{which follows directly from the definition of $u(\w)$ and the concavity of $c(s, \w)$ for each fixed $s\in\mathcal{S}$ \cite{boyd2004convex}[Section 6.5.5]. }
\begin{corollary}[Sub-gradient Optimal Cost]
\label{cor:derivative}
For any $\w\in\W$, the sub-gradients of $u(\w)$ are given by
$$
\partial u(\w)=\f(s^*(\w)),
$$
for any minimizing solution $s^*(\w)\in\mathcal{S}$.
\end{corollary}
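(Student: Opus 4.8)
The plan is to verify directly from the definition that, for any fixed $\w\in\W$ and any minimizer $s^*(\w)$, the vector $\f(s^*(\w))$ is a (super-)gradient of the concave function $u$ at $\w$; concretely, I would show that
$u(\w')\le u(\w)+\f(s^*(\w))\cdot(\w'-\w)$ for every $\w'\in\W$. Once this inequality is in hand the corollary is immediate, since this is exactly the defining inequality of a sub-gradient of a concave function, and Observation~\ref{obs:concavity} already guarantees that $u$ is concave (so the notion of sub-gradient is well posed). This is the standard argument for the sub-gradients of a pointwise infimum of affine functions, specialized to $u(\w)=\min_{s\in\mathcal{S}}\w\cdot\f(s)$.

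The key computation is short. Fix $\w\in\W$ and a minimizer $s^*(\w)\in\mathcal{S}$, so that $u(\w)=\w\cdot\f(s^*(\w))$ by definition of $s^*(\w)$ and \eqref{eq:self_cost}. For an arbitrary $\w'\in\W$, the solution $s^*(\w)$ is still feasible though generally not optimal for $\w'$; hence Observation~\ref{obs:self_opt_traj} gives $u(\w')\le \w'\cdot\f(s^*(\w))$. Writing $\w'=\w+(\w'-\w)$ and expanding,
\[
u(\w')\le \w'\cdot\f(s^*(\w)) = \w\cdot\f(s^*(\w)) + (\w'-\w)\cdot\f(s^*(\w)) = u(\w) + \f(s^*(\w))\cdot(\w'-\w),
\]
which is the desired sub-gradient inequality; since $\w'$ was arbitrary, $\f(s^*(\w))\in\partial u(\w)$.

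I do not expect a genuine obstacle here; the only point worth flagging is interpretational. When the minimizer $s^*(\w)$ is not unique, the full superdifferential $\partial u(\w)$ is the convex hull of $\{\f(s): s \text{ attains the minimum of } \w\cdot\f(\cdot)\}$, so the statement should be read as asserting the membership $\f(s^*(\w))\in\partial u(\w)$ for every choice of minimizer rather than a literal set equality. This one-sided inclusion is all that is used downstream, since the algorithm only ever needs a concrete sub-gradient $\f(s^*(\w))$ — available from the exact solver of Assumption~\ref{ass:cstsexst} — to build piecewise-linear upper bounds on $u$. A secondary remark: at a boundary point of $\W$ the inequality is only claimed against $\w'\in\W$, which is consistent with the interior-directional continuity of Corollary~\ref{lem:cost_cont} and suffices because every comparison in the paper stays within the simplex $\W$.
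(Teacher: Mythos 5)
Your proof is correct and takes essentially the same route as the paper: the paper gives no explicit derivation, simply citing the standard result on sub-gradients of a pointwise minimum of affine functions in Boyd and Vandenberghe, and your computation via Observation~\ref{obs:self_opt_traj} is precisely the argument underlying that citation. Your remark that the statement should be read as the membership $\f(s^*(\w))\in\partial u(\w)$ rather than a literal set equality when the minimizer is not unique is a fair clarification that the paper glosses over, and the one-sided inclusion is indeed all that is used downstream.
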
 
\blue{Observe that sub-gradients are defined even for non-differentiable continuous functions.} Further, The results above imply that given two weights $\w', \w^*\in\W$, the regret $r(\w'|\w^*)$ --- which coincides with the error incurred by approximating a solution $s^*(\w^*)$ with the solution $s^*(\w')$ --- is exactly the error of approximating a concave function via \emph{linear interpolation}. Indeed, the first order approximation of $u(\w^*)$ given $u(\w')$ is given by
$
u(\w^*)\approx u(\w') + \nabla u(\w')\cdot(\w^*-\w')
$
assuming $u$ is differentiable at $\w'$.  However, by Corollary \ref{cor:derivative}, $\nabla u(\w)=\partial u(\w)=\f(s^*(\w))$. This together with the definition $u(\w')=\f(s^*(\w'))\cdot \w'$ allows us to conclude that $u(\w^*)\approx u(\w')+\f(s^*(\w'))\cdot \w^* - \f(s^*(\w'))\cdot \w'=\f(s^*(\w'))\cdot \w^*$. The error of this first order approximation is therefore $\f(s^*(\w'))\cdot \w^* - u(\w^*)$ which is exactly the regret $r(\w'|\w^*)$. 

This is illustrated in Figure \ref{fig:Thm2} (a). Further, given any two weights $\w', \w''\in \W$, the maximum regret on the line segment $L$ between $\w', \w''$ given $\Omega=\{\w', \w''\}$ occurs at the weight on $L$ coinciding with the intersection of the tangent lines to $u(\w)$ at $\w'$ and $\w''$ along $L$ (Figure \ref{fig:Thm2} (b)). In light of this analysis, the objective in \eqref{eq:objective} is solved by a set $\Omega$ that provides the best linear interpolation of the concave function $u(\w)$. These insights are leveraged in the following section.
\begin{figure*}[t!]
    \centering
    \begin{subfigure}[t]{0.45\textwidth}
        \centering
        \includegraphics[width=0.99\textwidth]{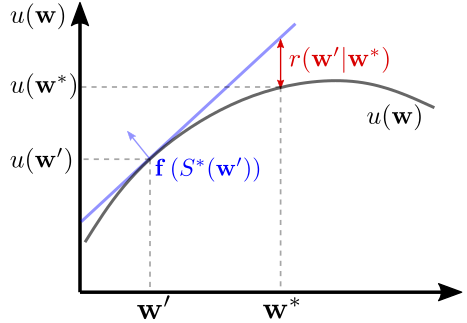}
        \caption{}
    \end{subfigure}%
    ~ 
    \begin{subfigure}[t]{0.45\textwidth}
        \centering
        \includegraphics[width=0.99\textwidth]{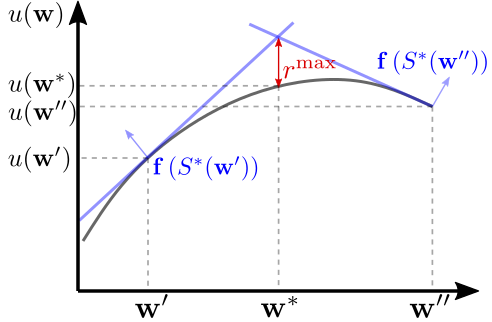}
        \caption{\bigskip}
    \end{subfigure}
    \caption{\footnotesize (a) Regret as the error of a first order approximation. (b) Maximum regret given a set $\Omega=\{\w', \w''\}$ at the intersection of their tangent lines.\normalsize}
    \label{fig:Thm2}
\end{figure*}

\section{Algorithm}
In this section, we present our solution to Problem \ref{prob:minmax}. The algorithm we propose works by recursively adding weights to a solution set $\Omega$. A strong candidate weight to add is one that is least represented by the current iteration of $\Omega$. The basic framework for such an approach could be described recursively:
\be
\label{eq:recurrel}
\Omega^{k+1}=\Omega^{k}\cup\{\argmax_{\w^*\in \W}\min_{\w'\in\Omega^k}r(\w'|\w^*)\},
\ee
where $\Omega^k$ is the solution after $k$ iterations from an initial set. Here, \eqref{eq:recurrel} recursively adds the weight with the maximum regret given $\Omega^k$. Obtaining the maximizer $\w^*$ is non-trivial due to its nested structure. Instead, our approach replaces $r(\w'|\w^*)$ in \eqref{eq:recurrel} with an upper bound $R(\w'|\w^*)$ whose maximizer $\w^*$ is obtained from a linear program (LP). 
\begin{figure*}[!t]
    \centering
    \begin{subfigure}[t]{0.44\textwidth}
        \centering
        \includegraphics[width=0.85\textwidth]{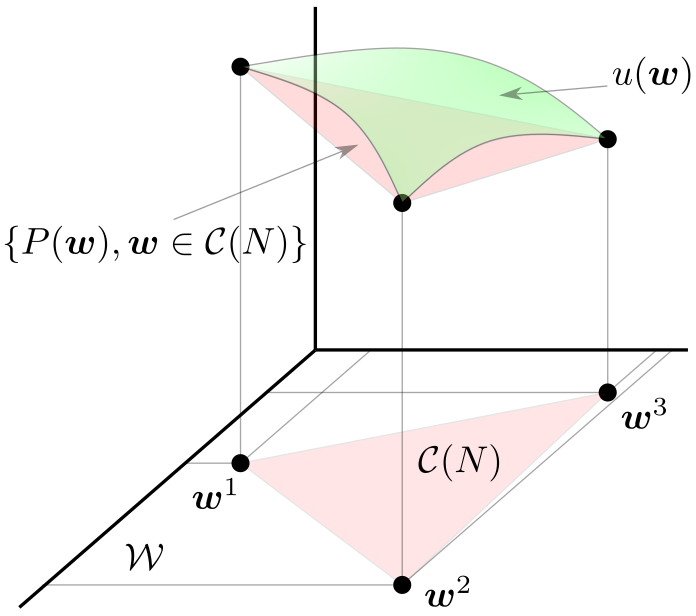}
        \caption{\footnotesize $P(\w), \C(N)$ for $N=\{\w^1,\w^2,\w^3\}$ \normalsize}
    \end{subfigure}
    ~ 
    \begin{subfigure}[t]{0.44\textwidth}
        \centering
        \includegraphics[width=0.85\textwidth]{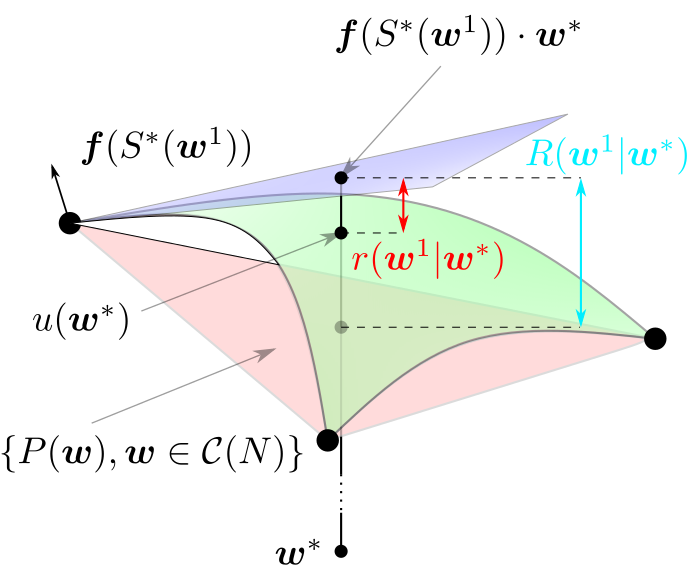}
        \caption{\footnotesize Regret $r,$ and regret bound $R$\normalsize\bigskip}
    \end{subfigure}
    \caption{\footnotesize Illustrative example of a neighborhood and its properties\normalsize.}
    \label{fig:SetsAndFunctions}
\end{figure*}
Given a set of weights $\Omega\subseteq\W$, we define $N$ as a set of $n$ \blue{(recall that $n$ is the number of objectives)} linearly independent weights $\w^1,\dots,\w^n\in\Omega$, and we let $\C(N)\subset\mathbb{R}^n$ denote the convex hull of $N$. We will loosely refer to $N$ as a neighborhood. 
\blue{We define a linear lower bound of the objective value $u(\w)$ from \eqref{eq:self_cost} inside a neighbourhood $N$. Let $P:\W\rightarrow \mathbb{R}_{\geq 0}$ be the linear function taking values $P(\w^i)=u(\w^i)$ for all $\w^i\in N$ (Figure \ref{fig:SetsAndFunctions} (a)).}
We denote the difference between the tangent plane at $\w'$ and the $P$ evaluated at $\w^*$ with $
R(\w'|\w^*) =\f(s^*(\w'))\w^*-P(\w^*), \ \forall \w'\in N, \w^*\in\C(N)$. Further, let
\begin{equation}
\label{eq:Rdef}
    \bar{R}(N)=\max_{\w^*\in\C(N)}\min_{\w'\in N} R(\w'|\w^*),\ \
    \bar{\w}(N)=\argmax_{\w^*\in\C(N)}\min_{\w'\in N} R(\w'|\w^*).
\end{equation}
Finally, we let $\mathcal{F}(N)$ represent the set of feature vectors of the neighborhood:
\begin{equation}
\label{eq:Fdef}
\mathcal{F}(N)=\{\f(s^*(\w^i)), \w^i\in N\}.
\end{equation}

Thus, $R(\w'|\w^*)$ is similar to $r(\w'|\w^*)$ from \eqref{eq:AbsRegret}, but with $u(\w^*)$ replaced with $P(\w^*)$. These definitions, illustrated in Figure \ref{fig:SetsAndFunctions}, motivate the Theorem:
\begin{theorem}[Upper Bound of Maximum Regret in a Neighborhood]
\label{thm:maxregret_bound}
Given a neighborhood $N$ of weights, it holds that
$$
\max_{\w^*\in \C(N)}\min_{\w'\in N}\rAbs(\w'|\w^*)\leq \bar{R}(N).
$$
\end{theorem}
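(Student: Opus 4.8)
The plan is to reduce the theorem to a single pointwise inequality between $u$ and the linear lower bound $P$. First I would write out both quantities explicitly: by \eqref{eq:AbsRegret}, $\rAbs(\w'|\w^*)=\f(s^*(\w'))\cdot\w^*-u(\w^*)$, while by the definition of $R$ we have $R(\w'|\w^*)=\f(s^*(\w'))\cdot\w^*-P(\w^*)$. Subtracting, for every $\w'\in N$ and $\w^*\in\C(N)$ one gets $R(\w'|\w^*)-\rAbs(\w'|\w^*)=u(\w^*)-P(\w^*)$, so the whole statement follows once we show $P(\w^*)\le u(\w^*)$ on $\C(N)$ --- i.e., that $P$ really is a lower bound of the optimal cost on the neighborhood's simplex.

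For that inequality I would invoke concavity of $u$ (Observation~\ref{obs:concavity}). Since $N=\{\w^1,\dots,\w^n\}$ is linearly independent, every $\w^*\in\C(N)$ has a unique representation $\w^*=\sum_{i=1}^n\lambda_i\w^i$ with $\lambda_i\ge 0$ and $\sum_i\lambda_i=1$; concavity then gives $u(\w^*)\ge\sum_{i=1}^n\lambda_i u(\w^i)$. Because $P$ is affine on the hyperplane $\{\w:\sum_i w_i=1\}\supseteq\W$ and agrees with $u$ at each vertex $\w^i$, the right-hand side equals $\sum_i\lambda_i P(\w^i)=P(\sum_i\lambda_i\w^i)=P(\w^*)$, hence $u(\w^*)\ge P(\w^*)$. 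A small preliminary point to dispatch here is that $P$ is well defined: on the affine hyperplane containing $\W$ the $n$ constraints $P(\w^i)=u(\w^i)$ determine a unique linear function precisely because the $\w^i$ are linearly independent.

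Finally I would assemble the min--max. The pointwise inequality $R(\w'|\w^*)\ge\rAbs(\w'|\w^*)$, valid for all $\w'\in N$, is preserved under $\min_{\w'\in N}$ for each fixed $\w^*\in\C(N)$, and then under $\max_{\w^*\in\C(N)}$, yielding $\bar R(N)=\max_{\w^*\in\C(N)}\min_{\w'\in N}R(\w'|\w^*)\ge\max_{\w^*\in\C(N)}\min_{\w'\in N}\rAbs(\w'|\w^*)$, which is exactly the claim.

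I do not expect a genuine obstacle here; the only care needed is in the two bookkeeping points above --- the well-definedness and affineness of $P$ on $\C(N)$, and the clean application of the concavity (Jensen) inequality for $u$ over the simplex. As with Corollary~\ref{lem:cost_cont}, it is worth emphasizing that the argument never uses uniqueness of $s^*(\cdot)$ or continuity of $\f(s^*(\cdot))$: only concavity of $u$ and the definitions of $R$ and $P$ enter.
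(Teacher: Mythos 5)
Your proof is correct and follows essentially the same route as the paper's: both arguments rest on concavity of $u$ giving $u(\w^*)\geq P(\w^*)$ on $\C(N)$, hence $\rAbs(\w'|\w^*)\leq R(\w'|\w^*)$ pointwise, and then pass this through the min--max. Your observation that $R-\rAbs = u-P$ is independent of $\w'$ is a slightly cleaner way to handle the inner minimization than the paper's explicit argument that the two minimizers over $N$ coincide, but the substance is the same.
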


\begin{proof}
For any fixed $\w^*\in \C(N)$, let $\w'_1=\argmin_{\w'\in N}\rAbs(\w'|\w^*)$ and let $\w'_2=\argmin_{\w'\in N}R(\w'|\w^*)$. Note that $\w'_1=\w'_2$. Indeed, we have $R(\w_2'|\w^*)\leq R(\hat{\w}|\w^*)$ for all $\hat{\w}\in N$ if and only if $\w^*\cdot\f(s^*(\w_2'))  \leq \w^*\cdot\f(s^*(\hat{\w}))$ which is equivalent to $r(\w_2'|\w^*)\leq r(\hat{\w}|\w^*)$.

By Observation \ref{obs:concavity}, $u(\w)$ is concave on $\C(N)$ implying that for any $\w\in \C(N)$, $u(\w)\geq P(\w)$ (see Figure \ref{fig:SetsAndFunctions}). Thus, by \eqref{eq:AbsRegret}, $r(\w'|\w^*)\leq R(\w'|\w^*)$ for any $\w^*\in \C(N)$. The result follows.

\end{proof}

The value of $\bar{R}(N)$ with corresponding weight $\bar{\w}(N)$ from \eqref{eq:Rdef} can be obtained by solving the following LP:
\begin{equation}
\label{eq:LPMaxRegret}
    \begin{split}
        \max_{x\in\mathbb{R}, \w\in\mathbb{R}^n}& x - P(\w)\\
        s.t. &\begin{bmatrix} 
f_1(s^*(\w^1))& \ldots &f_n(s^*(\w^1)) &-1\\ \vdots& \ddots &\vdots&\vdots \\ 
f_1(s^*(\w^n))& \ldots& f_n(s^*(\w^n)) &-1
\end{bmatrix}\begin{bmatrix}
\w\\
x
\end{bmatrix}\geq \begin{bmatrix}
0\\
\vdots\\
0
\end{bmatrix},\\
&\w\in\C(N).
    \end{split}
\end{equation}
If $(x^*, \w^*)$ solves \eqref{eq:LPMaxRegret}, the optimal cost is given by $x^*-P(\w^*)=\bar{R}(N)$, and $\w^*=\bar{\w}(N)$. Indeed, for any feasible $x, \w$, it holds that $x\leq \min_{\w^i\in N}\f(s(\w^i))\cdot\w$. Since $x$ is maximized, this will hold with equality for $x^*, \w^*$. Therefore, the cost of \eqref{eq:LPMaxRegret} is equivalent to $\max_{\w\in\C(N)}\min_{\w^i\in N}R(\w^i|\w)=\bar{R}(N)$. A detailed explanation of the implementation for Equation~\eqref{eq:LPMaxRegret} is provided in the supplementary materials.

In \eqref{eq:LPMaxRegret}, if $P(\w)$ is replaced with $u(\w)$, then the resulting  problem is solved by $x^*,\w^*$ if and only if $\w^*$ maximizes the regret in $\C(N)$ given the neighborhood $N$. This problem is not linear and would require solving the LSMOP in \eqref{eq:self_cost} potentially many times. Using the LP in \eqref{eq:LPMaxRegret} our method is summarized in Algorithm \ref{alg:greedy} described in the next section.  We iteratively partition $\W$ into smaller neighborhoods, adding weights that result in the largest upper bound of regret. 

\begin{algorithm}[!t]	
	\DontPrintSemicolon
	\KwIn{An exact solver to find $s^*(\w), \f(s^*(\w))$; a budget $K$}
	\KwOut{Sampled weights $\Omega$ and maximum regret}
	 $\Omega \gets \{\vect{e}^i, i=1,\dots, n\}$ \grey{// where $e^i$ is the $i^{th}$ row of the $n\times n$ identity matrix}\\
	 Obtain $s^*(\vect{e}^i), \f(s^*(\e^i)), i=1,\dots, n$ from exact solver
	 \\
	 $\N \gets \{\Omega\}$\\
     \For{$k=n$ to $K$}{
     $N =$ neighborhood in $\N$ with maximum $\bar{R}(N)$\\
     \If{$\bar{R}(N)=0$}{\textbf{break} \grey{// Terminate if maximum upper regret bound is 0}}
     $\Omega \gets \Omega \cup \{\bar{\w}(N)\}$\\
     Obtain $s^*(\bar{\w}(N)), \f(s^*(\bar{\w}(N)))$ from exact solver
	\\
    $\N=\N\setminus N$ \grey{// Remove max-regret neighborhood}\\
	\For{$\w^i $ in $N$}{
	$N^i\gets N \setminus \{\w^i\}\cup\{\bar{\w}(N)\}$
	\grey{// Replace $w^i$ with weight of the maximum regret bound}\\
	\If{$N^i$ is a neighborhood, i.e., its weights are lin.~independent}
	{$\N=\N\cup N^i$
	\\
	$\mathcal{F}(N^i)\gets \mathcal{F}(N) \setminus \{\f(s^*(\w^i))\}\cup\{\f(s^*(\bar{\w}(N)))\}$
	}
	}
     }
	\Return{$\Omega$ and the maximum value of $\bar{R}(N)$ over all $N\in \N$}
	\caption{\textsc{Min-Regret Pareto Sampling (MRPS)}}
	\label{alg:greedy}
\end{algorithm}

\subsection{Algorithm Description}

Algorithm \ref{alg:greedy} creates and maintains a set $\N$ of neighborhoods $N\subset\W$. Each $N\in\N$ is a set of weights $N=\{\w^1,\dots,\w^n\}$ where $\w^i\in\Omega, i=1,\dots,n$. 
We compute $\bar{R}(N)$ and $\bar{\w}(N)$ with the LP in \eqref{eq:LPMaxRegret} for $N$ using the set of objective vectors $\mathcal{F}(N)$.
The algorithm begins with a single neighborhood whose weights are the $n$ canonical basis elements of $\mathbb{R}^n$ (Line 1). The algorithm then iteratively selects the neighborhood $N$ in $\N$ with the largest upper bound of regret (Line 5), and adds its regret weight $\bar{\w}(N)$ to $\Omega$ (Line 8). It then splits and replaces $N$ with at most $n$ smaller neighborhoods (Lines 11-15) formed by iteratively replacing elements in $N$ with $\bar{\w}(N)$ (Line 12). Finally, the algorithm returns the set $\Omega$ as well as an upper bound on the regret given $\Omega$ (Line 16). Two steps of the algorithm are illustrated in Figure \ref{fig:AlgoFirstSteps}, starting with a single neighborhood $N_1$ in (a) which is then split around $\w^3=\bar{\w}(N_1)$ into two new neighborhoods $\N=\{N_2, N_3\}$ in (b). Since $\bar{R}(N_3)>\bar{R}(N_2)$, $N_3$ is split around $\w^5=\bar{\w}(N_3)$ in (c). Finally, in (d), the red area shows the regret given $\Omega$.

\blue{Observe that Algorithm \ref{alg:greedy} may be modified to compute a set $\Omega$ given a desired maximum regret $r_{\text{max}}>0$. This could be accomplished by replacing the input $K$ with $r_{\text{max}}$, and the stopping criteria in Line 4 with a while loop that runs until $\bar{R}\leq r_{\text{max}}$. Here, $\bar{R}$ represents the maximum regret over all neighborhoods $\bar{R}=\max_{N\in\N}\bar{R}(N)$ and can be maintained in the body of the loop. Since $\N$ forms a partition of $\W$, we are guaranteed that the regret of any weight given $\Omega$ is no more than $\bar{R}$ by Theorem \ref{thm:maxregret_bound}. Therefore, if Algorithm \ref{alg:greedy} terminates when $\bar{R}\leq r_{\text{max}}$, the desired maximum regret is achieved.}

\begin{figure*}[t]
    \centering
    \begin{subfigure}[t]{0.46\textwidth}
        \centering
        \includegraphics[width=0.99\textwidth]{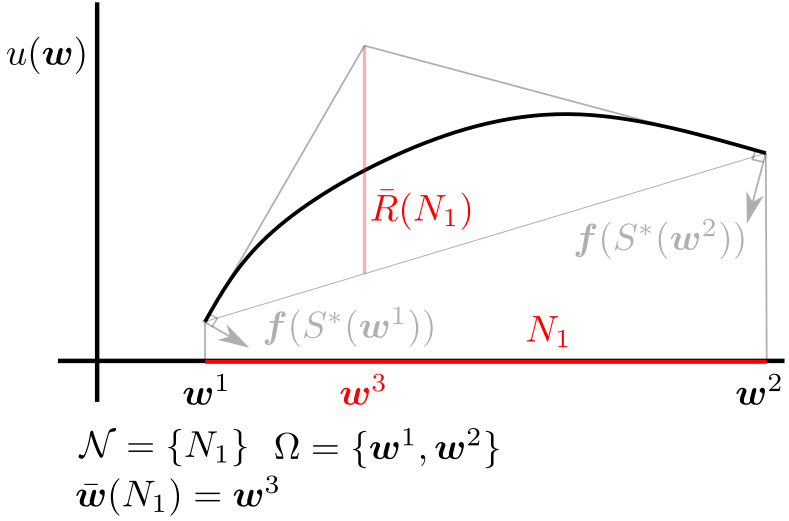}
        \caption{}
    \end{subfigure}%
    ~ 
    \begin{subfigure}[t]{0.46\textwidth}
        \centering
        \includegraphics[width=0.99\textwidth]{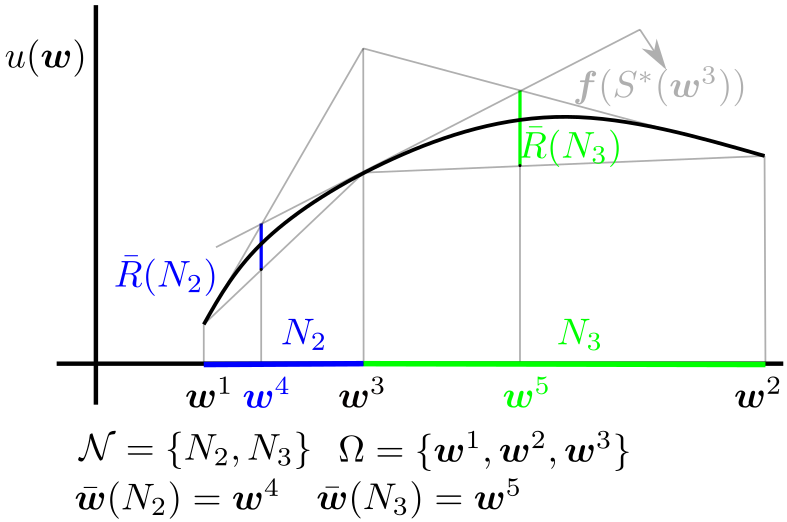}
        \caption{\bigskip}
    \end{subfigure}
    ~
    \begin{subfigure}[t]{0.46\textwidth}
        \centering
        \includegraphics[width=0.99\textwidth]{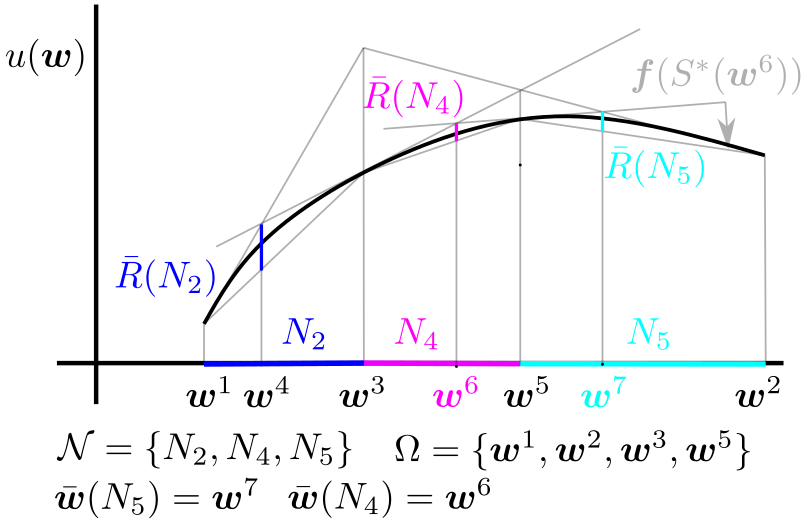}
        \caption{}
    \end{subfigure}
    ~
     \begin{subfigure}[t]{0.46\textwidth}
        \centering
        \includegraphics[width=0.99\textwidth]{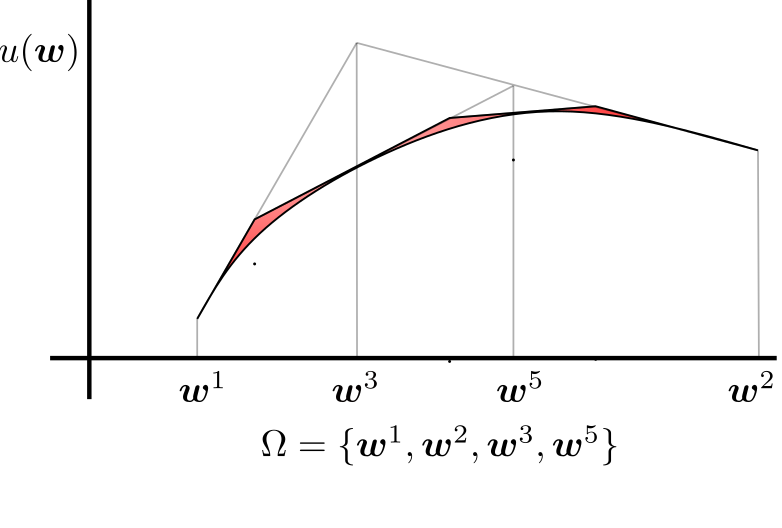}
        \caption{\bigskip}
    \end{subfigure}
    \caption{\footnotesize Illustration of the first two iterations of Algorithm \ref{alg:greedy}. \normalsize}
    \label{fig:AlgoFirstSteps}
\end{figure*}
\subsection{Algorithm Properties}
We observe several beneficial properties to the approach outlined above. First, for a budget of $K$, Algorithm \ref{alg:greedy} will require that the LSMOP in \eqref{eq:self_cost} be solved at most $K$ times, once per element of $\Omega$. Second, the value of $\bar{R}(N)$ returned by the algorithm is an upper-bound on the value of regret in the original problem~\eqref{eq:objective}. Indeed, initially $\N=\{\Omega\}$ and $\C(\Omega)=\W$. At every iteration, a neighborhood $N\in\N$ is split into at most $n$ sub-neighborhoods such whose convex hulls are disjoint and collectively form $\C(N)$. Then, by Theorem \ref{thm:maxregret_bound}, it holds that $\max_{\w^*\in \W}\min_{\w'\in \Omega}r(\w'|\w^*)\leq \max_{N\in\N}\bar{R}(N)$.

Further, the set $\Omega(K)$ returned by Algorithm \ref{alg:greedy} on input $K$ asymptotically and monotonically approaches a set with zero regret as $K\rightarrow\infty$. The proof of this is omitted for brevity, but holds because otherwise a neighborhood $N\in\N$ would fail to decrease in size when split (Lines 11-15). This in turns requires that there is a $\w^i\in N$ such that $||\bar{\w}(N)-\w^i||_2$ decreases to 0. Since $\bar{\w}(N)$ is chosen to maximize $\bar{R}(N)$, this can only occur if $\bar{R}(N)$ is unbounded at $\w^i$ implying that the objectives are unbounded at $\w^i$ in violation of Assumption \ref{ass:bndcost}. 

\section{Simulation Results}
We demonstrate our algorithm in simulations for two different domains: Trajectory planning and learning human preferences.
We compare our proposed algorithm to uniform sampling in the weight space \cite{zhang2007moea,wilde2020active,wilde2019bayesian}. This baseline is denoted by $\Uni$, while the proposed approach from Algorithm 1 is $\Greedy$.
%
\subsection{Experiment 1: Dubins Trajectories}
In the first experiment, we approximate the Pareto front for a LSMOP motion planning problem for a Dubins vehicle, as shown in the example from Figure \ref{fig:Dubins_traject_Pareto_examples}.
The objectives in the problem are trajectory length and integral of the squared (IS) jerk.
To find solutions $s^*(\w)$, the motion planner can compute different Dubins trajectories using different turn radia and pick the optimum among these.

\paragraph{Illustrative Example}
First, we present more insight into the example in Figure \ref{fig:Dubins_traject_Pareto_examples} with $K=7$ samples.
We notice that $\Greedy$ produces a larger variety of sample trajectories, especially those with shorter length. This is also visualized in the approximations of the Pareto front: $\Uni$ exhibits a large gap, while the proposed method places samples more homogeneously. In Figure \ref{fig:dubins_u_approx} we show the optimal cost $u(\w)$ (ground truth computed with 10,000 uniform weights), together with the tangent planes of the approximating samples of both approaches. Here, $\Greedy$ places more samples at the right end where $u(\w)$ changes more rapidly. Thus, the gap between the best approximating weight and the cost, i.e., the regret, is substantially smaller than $\Uni$. 

\begin{figure*}[!t]
    \centering
    \begin{subfigure}[t]{0.44\textwidth}
        \centering
        \includegraphics[width=0.99\textwidth]{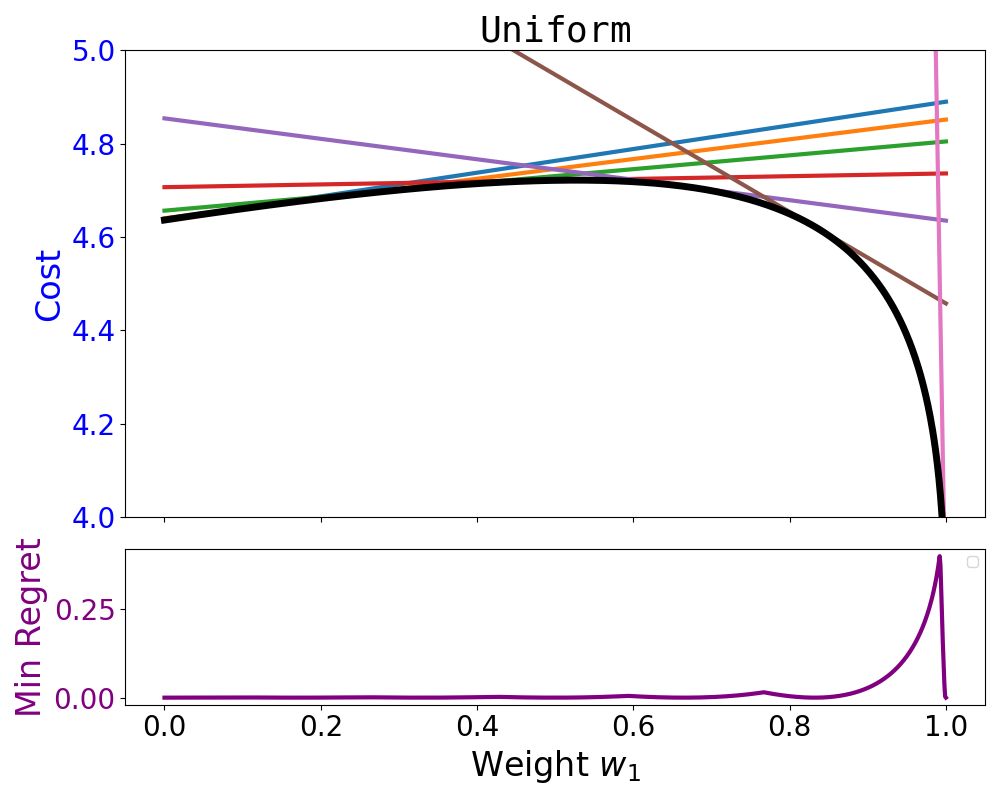}
        \caption{\footnotesize Approximation using $\Uni$.\bigskip\normalsize}
    \end{subfigure}~
    \begin{subfigure}[t]{0.44\textwidth}
        \centering
        \includegraphics[width=0.99\textwidth]{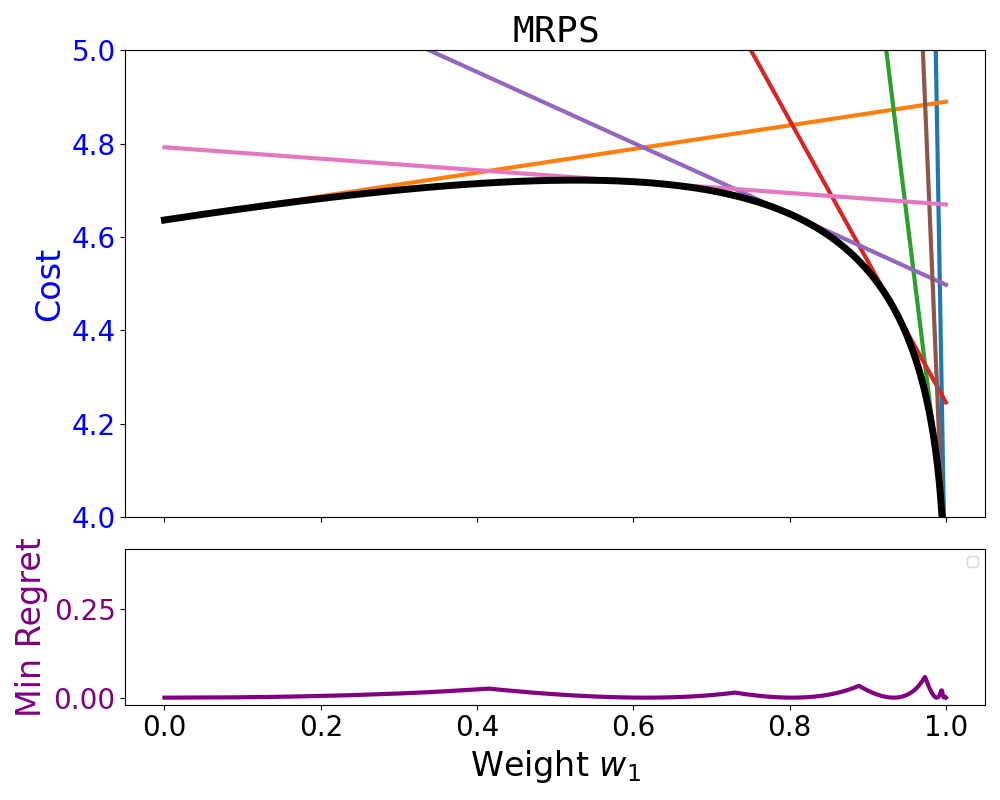}
        \caption{\footnotesize Approximation using $\Greedy$.\normalsize}
    \end{subfigure}
    \caption{\footnotesize Approximation of $u(\w)$ and resulting regret with $\Uni$ (a) and $\Greedy$ (b).}
    \label{fig:dubins_u_approx}
\end{figure*}

\paragraph{Quantitative Analysis}

We repeat the above Dubins planning experiment, but with randomized goal locations and various sampling budgets $K$.
We evaluate algorithm performance on three measures: the regret as defined in \eqref{eq:AbsRegret}, the relative regret where the difference in \eqref{eq:AbsRegret} is replaced with a ratio, and the hypervolume of the estimated Pareto front, similar to \cite{parisi2017manifold, xu2021multi}, using an exact integration for 2 features, and a Monte Carlo approximation as in \cite{parisi2017manifold} for higher dimensions.

Moreover, we consider cases with varying numbers of objectives. Figure \ref{fig:dubins3D} shows the result for a three objective system consisting of trajectory length, IS jerk and maximum jerk. When $K=0$ both approaches only use the basis solutions $e^1,\dots,e^n$, i.e., only the single objective solutions are available. For both, absolute and relative regret, we observe that $\Greedy$ achieves substantially smaller values for all $k>0$. With just $3$ samples, $\Greedy$ achieves a median regret of $.17$, corresponding to a median of $1.08$ in relative regret, i.e., percent error. In contrast $\Uni$ only achieves a medians of $.23$ and $1.16$ with $10$ samples for regret and relative regret, respectively.
The smaller estimated hypervolume also suggests that the  $\Greedy$-samples lead to a tighter linear approximation of the Pareto front.
In summary, the samples found by $\Greedy$ for small $K$ have smaller regret and hypervolume than samples found by $\Uni$ with $K=10$. Thus, our proposed method uses samples much more efficiently.
\begin{figure*}[t!]
    \centering
        \includegraphics[width=0.99\textwidth]{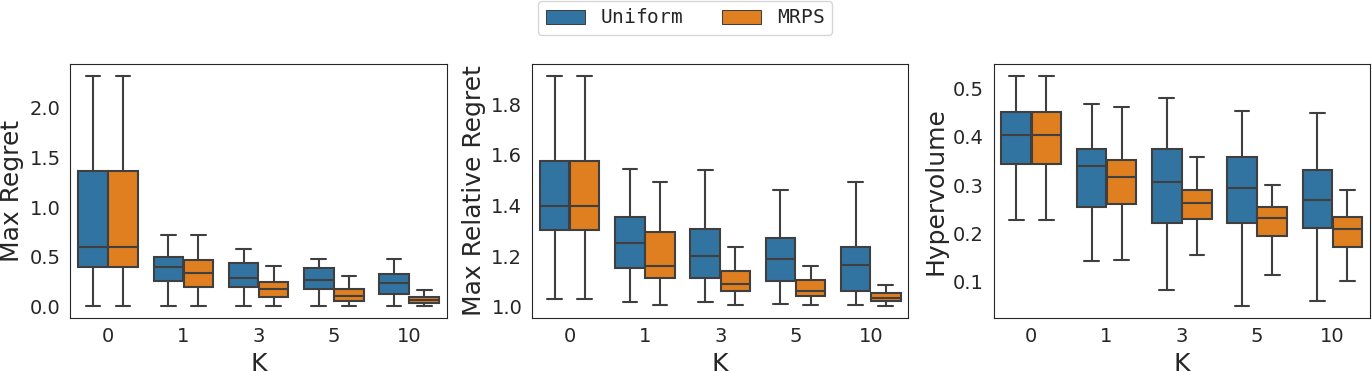}
        \caption{\footnotesize Numerical Results for the Dubins experiment with 3 objectives. \normalsize}
    \label{fig:dubins3D}
\end{figure*}

\paragraph{Varying number of objectives}
We also considered problem variances with $2$ or $4$ objectives. As expected, with two objectives (length and IS jerk) both approaches achieve a smaller regret with fewer samples. Nonetheless, the $\Greedy$ samples for $K=3$ are superior to the $\Uni$ samples for $K=10$, and the regret of $\Greedy$
actually converges to $0$ for $K=10$. We also tested the algorithm with four features, where the fourth objective is to avoid part of the environment. While the regret is larger for both approaches, we still observe the same trend that $\Greedy$ with $3$ samples is on-par with $\Uni$ using $10$ samples.

\subsection{Experiment 2: Reward Learning}
As a second example, we show how using the proposed sample method when learning user preferences, i.e., learning a user specific, but hidden weight vector $\w^*$. As mode of user interaction, we consider learning from choice \cite{sadigh2017active, biyik2019asking, wilde2020active, wilde2019bayesian, wilde2020improving} where the user is iteratively queried with two potential robot trajectories and indicates the preferred one. Repeating this over multiple iterations allows the robot infer about the user weights $\wuser$. Most algorithms for this problem require a set of presampled trajectories from which the best query is selected using some heuristic \cite{sadigh2017active, biyik2019asking, wilde2020active,wilde2019bayesian}. These presamples are either random trajectories \cite{sadigh2017active, biyik2019asking}, or optimal solutions for uniformly random weights for the LSMOP \cite{wilde2020active,wilde2019bayesian, wilde2021learningScale}.

Our proposed algorithm $\Greedy$ can be used to generate presamples for these learning problems. Thus, we compare the learning progress over $10$ iterations when either using $\Uni$ or $\Greedy$ samples. For a clear comparison, we use a simple, deterministic user model: presented with trajectories $A$ and $B$, they choose $A$ if and only if $f(A)\wuser\leq f(B)\wuser$. Similar to \cite{wilde2021learning, wilde2020improving} the trajectory preferred in the previous iteration constitutes one of the two trajectories presented in the next. 
\blue{The robot employs \emph{active} learning: it can choose the second trajectory to be presented in the next iteration.}
We employ a random selection from the presampled set ($\Random$), or the minmax regret approach from \cite{wilde2020active} ($\Regret$).
Finally, generating random user weights $\wuser$ is not trivial: When $\wuser$ is drawn uniformly random, the sample set $\Uni$ comes from the same distribution, biasing the experiment. Thus, we randomly select users from the union of two sample sets $\Omega(\Uni)$ and $\Omega(\Greedy)$ each of size $K=20$. 
We evaluate learning performance using relative regret.

\begin{figure*}[t!]
    \centering
        \includegraphics[width=0.85\textwidth]{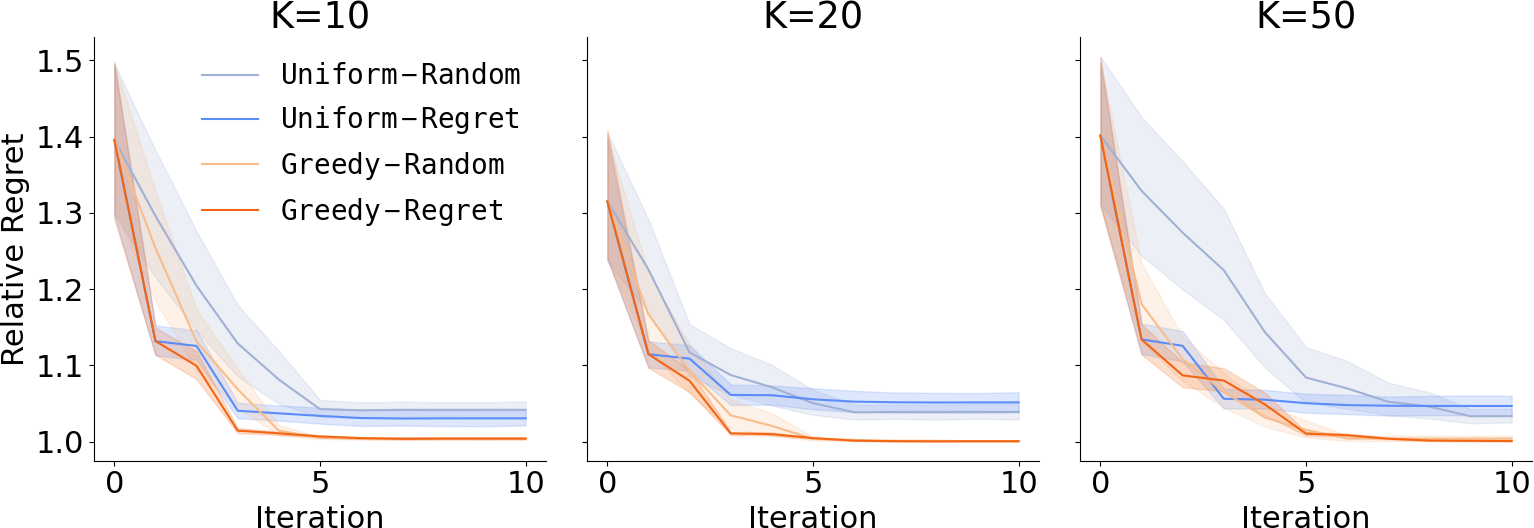}
    \caption{\footnotesize Learning from choice with  presampled sets of different size $K$. \blue{Blue shows learning progress when using uniformly randomly sampled trajectories for active learning with either random or regret queries. Orange shows learning progress for the same query methods, but trajectories are sampled with our proposed algorithm.} \normalsize}
    \label{fig:learning}
\end{figure*}

We test this learning framework using the four feature Dubins planning problem from earlier, with one fixed goal location. Figure \ref{fig:learning} shows the result for learning with presampled sets of various sizes $K$.
\blue{We compare four different approaches: $\Uni$ presamples combined with $\Random$ or $\Regret$ active learning, and $\Greedy$ presamples with $\Random$ or $\Regret$ active learning.}
Overall we observe that the $\Greedy$ samples lead to a smaller learning error than $\Uni$ samples, regardless of the query method ($\Random$ or $\Regret$). Indeed, the $\Uni$ sets do not always include a close-to-optimal sample such that the learning is eventually unable to make progress. That is the case when $\wuser$ was drawn from the $\Greedy$ samples. However, the opposite effect is negligible: Learning with the $\Greedy$ samples finds close-to-optimal solutions, implying that the error is very small even when $\wuser$ comes from $\Uni$.

When comparing different sizes $K$ of the sample sets, we observe that all approaches learn slightly slower for larger $K$ - the increased number of available trajectories seems to rather distract the learning algorithm than offering more informative queries. More surprisingly, when using $\Uni$ samples, the learning still stops making progress. This indicates that the larger set still does not contain close-to-optimal solutions. This further supports our earlier findings that while $\Greedy$ only needs small $K$ to find a close-to-optimal sample for any $\wuser$, while $\Uni$ is unable to achieve the same even for large $K$.

In summary, the experiment shows that using $\Greedy$ to generate pre-sampled solutions in reward learning allows for learning close-to-optimal solutions with significantly fewer samples than when relying on randomly generated pre-samples.
\section{Discussion and Future Work}
In this work, we present a method by which a set of weights can be computed that bounds the maximum regret for any weight in the weight space. However, we only presented results for the case when an exact solver of the underlying LSMOP is available. In future, we will extend our work to the case when an approximation algorithm is used. Further, we constrained our analysis to linear scalarization. If the Pareto front of the underlying MOP is non-convex, linear scalarization fails to capture all Pareto-optimal solutions. 
\blue{However, many of the theoretical results presented here may be extended to Chebyshev scalarization which is known to be Pareto-complete.}

\bibliographystyle{IEEEtran}

\section{Appendix}
\appendix

\section{Implementation of Max-Min Neighbourhood Regret}

Below we detail the implementation of the linear program in Equation~(11).
We formulate the convex hull constraint $\w\in\C(N)$ using a scalars $\lambda^1,\dots, \lambda^n\in [0,1]$ to write $\w$ as a convex combination of the neighboughood weights $\w =\lambda^1\w^1 + \dots + \lambda^n\w^n $.
Thus, the equality constraints are given by
\begin{equation}
\label{eq:LPMaxRegret_ineq_details}
    \begin{bmatrix} 
1&& \ldots &1 &0& \ldots &0\\ 
0&& \ldots &0 &1& \ldots &1\\ 
-1&0& \ldots &0 &w_1^1& \ldots &w_1^n\\
0&-1& \ldots &0 &w_2^1& \ldots &w_2^n\\
\vdots&&\ddots&\vdots&\vdots&\ddots&\vdots\\
0&& \ldots &-1 &w_n^1& \ldots &w_n^n\\
\end{bmatrix}
\begin{bmatrix}
w_1\\
\vdots\\w_n\\
\lambda^1\\
\vdots\\\lambda^n
\end{bmatrix}
= \begin{bmatrix}
1\\1\\
0\\
\vdots\\
0
\end{bmatrix}.
\end{equation}
The first row ensures that $\w$ is normalized, i.e., lies in $\W$ where all its components sum to $1$. The second ensures the same for $\lambda^1,\dots, \lambda^n$.
The other rows ensure that the $i$-th element of the vecotr $\w$ is a convex combination of the $i$-th component of all neighbourhood vectors $\w^1,\dots, \w^n$. In the objective function, we write $P(\w)$ using the same convex combination as
\be
x - \sum_{i=1}^n \lambda^i u(\w^i).
\ee

Finally, we require that $w_i\geq0$ and $\lambda^i\geq0$ for all $i=1,\dots,n$.



\section{Simulation Results}
In this section, we evaluate the performance of the proposed algorithm on a NP-hard optimization problem, namely traveling salesman problem with multiple agents.

Given a tour $T$ in $G = (V, E, c)$, we denote the cost of the tour with $\mathrm{cost}(T)$. Then the variation of the multi-traveling salesman problem is defined as follows:

\begin{problem}[Multi Traveling Salesman Problem (mTSP)]
Given a graph $G = (V, E, c)$, a set of $m$ robots, the weights $w_1, w_2$, and a depot $d \in V$,  the objective is to find a set of $m$ tours, $T_1, \ldots, T_m$, starting at $d$ such that the linear combination of the total tour length and the maximum tour length is minimized, i.e.,
\[
    \min_{T_1, \ldots, T_m} w_1\sum_{i = 1}^{m} \mathrm{cost}(T_i) + w_2 \max_{i \in \{1, \ldots, m\}} \mathrm{cost}(T_i). 
\]
\end{problem}


\begin{figure*}[t!]
    \centering
    
        \includegraphics[width=0.95\textwidth]{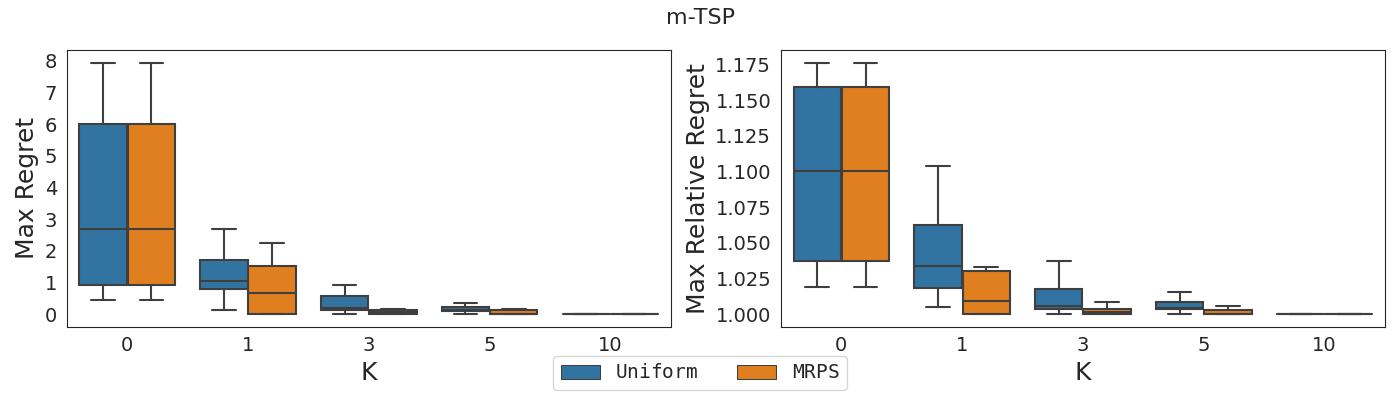}
        \caption{Sampling  trade-offs  between  min-sum  and  min-max  tour  lengths  for mTSP.}
    \label{fig:mTSP}
\end{figure*}

The two features, total tour length and maximum tour length, represent the total energy consumption of a fleet of robots to collectively service the tasks and the maximum time to service the tasks, respectively. Note that depending on the user-preferences or different scenarios, one of the features become more important. For instance, in a monitoring scenario with a set of robots, depending on the time and the state of the observed environment, the service time is more important than the total energy consumption of the fleet. Observe that finding the optimal solution of the mTSP problem for given $w_1, w_2$ is computationally expensive, therefore, changing the strategy based on the state of the environment in an online fashion is not feasible. Using the proposed approach, we generate  set of candidate sample weights that minimize the maximum regret for any possible scenario or user preference. Figure~\ref{fig:mTSP} shows the results of experiment with $15$ vertices to observe and $10$ robots. The left figure illustrates the maximum regret using the samples with the proposed method and the uniform sampling on the weight set. Note that the maximum regret of the proposed method almost converges to 0 with $3$ samples, while uniform sampling needs $10$ samples to achieve the same. Overall the difference between the methods is larger on the maximum relative regret measure.

\end{document}